\title{\LARGE \bf
Swarm Bug Algorithms for Path Generation in Unknown Environments
}
\author{Alexander Johansson and Johan Markdahl% <-this % stops a space
	% stops a space
	\thanks{Alexander Johansson and Johan Markdahl are with the Swedish Defence Research Agency (FOI),  SE-100 44 Stockholm, Sweden. Emails: \tt\small alexander.samimi.johansson@foi.se, markdahl@kth.se  }
	%  {\tt\small h.kwakernaak@autsubmit.com}}%
%\thanks{P. Misra is with the Department of Electrical Engineering, Wright State University,
	%Dayton, OH 45435, USA
	% {\tt\small pmisra@cs.wright.edu}}%
}
\newtheorem{theorem}{Theorem}         
\newtheorem{corollary}{Corollary} 
\newtheorem{algorithm}{Algorithm}
\newtheorem{remark}{Remark}
\newtheorem{proposition}{Proposition}
\begin{document}

\maketitle
\thispagestyle{empty}
\pagestyle{empty}

%%%%%%%%%%%%%%%%%%%%%%%%%%%%%%%%%%%%%%%%%%%%%%%%%%%%%%%%%%%%%%%%%%%%%%%%%%%%%%%%
\begin{abstract}
	In this paper, we consider the problem of a swarm traveling between two points as fast as possible in an unknown environment cluttered with obstacles. Potential applications include search-and-rescue operations where damaged environments are typical. We present swarm generalizations, called SwarmCom, SwarmBug1, and SwarmBug2, of the classical path generation algorithms Com, Bug1, and Bug2. These algorithms were developed for unknown environments and require low computational power and memory storage, thereby freeing up resources for other tasks. We show the upper bound of the worst-case travel time for the first agent in the swarm to reach the target point for SwarmBug1. For SwarmBug2, we show that the algorithm underperforms in terms of worst-case travel time compared to SwarmBug1. For SwarmCom, we show that there exists a trivial scene for which the algorithm will not halt, and it thus has no performance guarantees. Moreover, by comparing the upper bound of the travel time for SwarmBug1 with a universal lower bound for any path generation algorithm, it is shown that in the limit when the number of agents in the swarm approaches infinity, no other algorithm has strictly better worst-case performance than SwarmBug1 and the universal lower bound is tight.

\end{abstract}

%%%%%%%%%%%%%%%%%%%%%%%%%%%%%%%%%%%%%%%%%%%%%%%%%%%%%%%%%%%%%%%%%%%%%%%%%%%%%%%%
\section{INTRODUCTION}

\subsection{Motivation}

Swarm robotics has received increasing attention over the past two decades and is predicted to be disruptive in many fields \cite{Dorigo2020}. In many swarm robotics applications, the size of the robots is of critical importance, such as for search missions in narrow and unknown environments. This enforces hardware constraints on the robots, limiting the robots' computational power and memory storage. The most popular navigation solutions for unknown environments require computational power and memory storage that exceeds the hardware limits for small robots, for example, simultaneous localization and mapping (SLAM) \cite{Durrant2006}. Even if it is possible to run SLAM on small robots such as the Crazyflie platform \cite{Vikgren2023}, it consumes a large portion of the available resources that are better left available for other tasks. One  navigation solution class, bug algorithms, was developed for single-robot systems in the '80s when the hardware capacity was minimal compared to today. It is thus promising to explore bug algorithms as navigation solutions for swarm robotics.

\subsection{Related work}

The authors in the seminal work \cite{Lumelsky1986} and \cite{Lumelsky1987} presented the first bug algorithms. The purpose of bug algorithms is to generate paths between a start point and a target point by simple ``bug-like" behaviors such as ``wall following" and moving straight toward the target. Three algorithms, called Com, Bug1, and Bug2, with different levels of greediness and memory storage, were presented in \cite{Lumelsky1986} and \cite{Lumelsky1987}.  The algorithm Com (named after its ``common sense" behavior) is based on traveling straight to the target when possible and otherwise following the blocking obstacle's boundary. The algorithm Bug1 is based on exploring each encountered obstacle's complete boundary and leaving the obstacle at the point with the shortest distance to the target. The algorithm Bug2 is based on leaving obstacles when the line drawn from the start point to the target point is crossed. In \cite{Lumelsky1986} and \cite{Lumelsky1987},  upper bounds of the path lengths generated by the algorithms Bug1 and Bug2 were given in terms of the sum of perimeters of the unknown obstacles. The greediest algorithm, Com, is not guaranteed to reach the target and can enter an infinite loop. Other variations of Com, Bug1, and Bug2 have been developed; see, for example,  \cite{Sankaranarayanar1990, Sankaranarayanan1990b, Horiuchi2001, Kamon1998, Taylor2009, Lee1997}. The reader is referred to \cite{Mcguire2019} for an extensive survey of bug algorithms.

The authors in \cite{Sarid2007} and \cite{Kandathil2020} presented multi-robot extensions of the algorithm Bug1 and focused on their theoretical characteristics. In \cite{Sarid2007}, the robots are divided into pairs, and each pair is assigned an elliptic curve in which they travel toward the target. When a pair encounters an obstacle, the pair splits, and the robots explore one side of the obstacle each. The pair leaves the obstacle from the point closest to the target, similar to Bug1. A worst-case scene was constructed in \cite{Sarid2007}, in which the path lengths of the proposed algorithm were compared against that of an optimal off-line solution with full environmental knowledge. In  \cite{Kandathil2020}, for each encountered obstacle, two of the robots explore the obstacle boundary to identify the point closest to the target, and until this point is identified, the other robots stand still. This method has the advantage of minimizing the waiting robots' path lengths, which is important in energy-critical applications. The theoretical evaluation in \cite{Kandathil2020} is based on comparing the performances of the developed multi-robot algorithm and  Bug1 in environments with one obstacle.

Recently, in \cite{Mcguire2019_b, Duisterhof2021, Tan2022}, bug algorithms for swarm robotics were developed and experimentally tested. Under the algorithm in  \cite{Mcguire2019_b}, the robots in the swarm travel in different directions to cover different parts of the environment, which is helpful in, for example, search-and-rescue applications. The method in \cite{Duisterhof2021} was developed for seeking gas leaks using a swarm of robots, and the gas intensity is used to get the search directions of the robots. The algorithm in \cite{Tan2022} was developed for search missions and uses robots equipped with auditory and olfactory sensors to support cooperation within the swarm. Performance guarantees were not provided in  \cite{Mcguire2019_b, Duisterhof2021, Tan2022}.

\subsection{Contributions}

In this paper, we develop swarm extensions of the classical bug algorithms Com, Bug1, and Bug2 and give performance guarantees of the developed algorithms. To the best of our knowledge, the only existing literature providing swarm extensions of bug algorithms with performance guarantees are \cite{Sarid2007} and \cite{Kandathil2020}. In \cite{Sarid2007} and \cite{Kandathil2020}, extensions of the Bug1 algorithm were proposed, similar to this paper, where a new swarm extension of Bug1 will be provided (together with extensions of Com and Bug2). The performance guarantees provided for the algorithms in this paper are in terms of the time to reach the target and will depend on the sum of the perimeter lengths of the obstacles. This is similar to the seminal work \cite{Lumelsky1986} and \cite{Lumelsky1987}, where Com, Bug1, and Bug2 were first proposed, but is different from \cite{Sarid2007}, where the path lengths are compared against an optimal off-line solution with full knowledge of the environment and also different from \cite{Kandathil2020}, where the authors only show that their swarm extension of Bug1 outperforms the original Bug1 in several aspects. 

 The main contributions of this paper are as follows:

\begin{itemize}
	\item We derive a universal lower bound of any path generation algorithm which is suitable for comparison against any developed swarm path generation algorithm.

    \item We formulate path generation algorithms SwarmCom, SwarmBug1, and SwarmBug2, which are swarm extensions of the classical single-agent path generation algorithms Com, Bug1, and Bug2. 
	
	\item We derive an upper bound of the path generation algorithm SwarmBug1. This upper bound turns out to coincide with the derived universal lower bound in the limit when the swarm size approaches infinity. No other algorithm thus has a strictly better worst-case performance than SwarmBug1, and the universal lower bound is tight.

	\item We derive a lower bound for the path generation algorithm SwarmBug2. The lower bound for SwarmBug2 is higher than the upper bound for SwarmBug1, indicating that SwarmBug1 is preferable. 
	
\end{itemize}

\iffalse
\subsection{Outline}

The system model, including the environment with obstacles and the agents' actions, is introduced in Section~\ref{sec:sm}. In Section \ref{sec:lb}, we show a lower bound on time for the first agent in a swarm to reach the target under any path generation algorithm. In Section \ref{sec:sc}, we formulate SwarmCom and construct a scene such that the Swarm will never reach the target. In Section \ref{sec:sb1}, we formulate SwarmBug1 and show an upper bound on time for the Swarm's first (and last) agents to reach the target. In Section \ref{sec:sb2}, we formulate SwarmBug2, and, similar to SwarmBug1, we show an upper bound on the time for the Swarm to reach the target. The paper is concluded in Section \ref{sec:cfw}.
\fi

\section{SYSTEM MODEL}\label{sec:sm}

We consider a swarm of $n$ agents whose task is to, as quickly as possible, get one agent from the swarm's starting point to a fixed target point. The task is completed once one agent in the swarm has arrived at the target point. The start point and the target point, as well as the movement of the agents in the swarm, are located in a 2D environment. Each agent in the swarm knows its coordinates and the start point's and target point's coordinates; hence, it can also know its direction to the target as well as the line between the start and target points. The task is challenging as the environment also includes obstacles unknown to the agents  \emph{a priori}. The agents only have tactile sensing to make the problem even more challenging. That is, agents cannot sense obstacles at a distance. The obstacles are defined by closed curves and are non-overlapping. We also assume a path exists from the start point to the target point. An example of an environment with obstacles is illustrated in Fig.~\ref{fig:general}.

\begin{figure}
	\centering
	\includegraphics[width=0.4\textwidth]{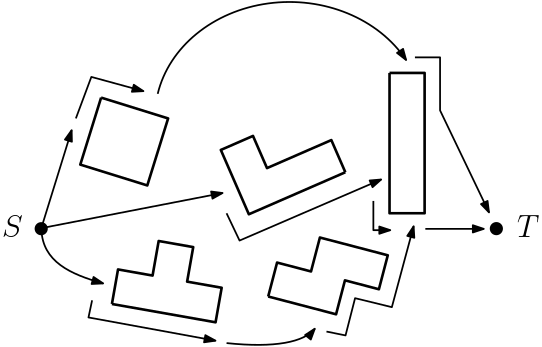}
	\caption{A swarm of three agents move from a start point ($S$) to a target point ($T$) while navigating a scene with obstacles in a 2D environment.}
	\label{fig:general}
\end{figure}

The agents in the swarm can move toward the target in a straight line (if an obstacle does not block this direction) or stand still. The agents can also follow the boundary of an obstacle, either in a clockwise or counterclockwise direction, when located at one of the obstacle's boundary points. We assume that the agents' speeds are between zero and some maximum speed. The agents are modeled as points, and the spacing and collision avoidance between agents are not addressed in this work.

We introduce some necessary notations before proceeding. The length between the start point and the target point is denoted as $D$. The perimeter of an obstacle indexed with $i$ is denoted as $p_i$. The time to travel a distance $x$ with maximum speed is denoted as $t(x)$. The total time to traverse the perimeters of all obstacles in the environment with maximum speed is denoted as $\sum t(p_i)$. The time for the first agent in the swarm to arrive at the target point is denoted as $t_f$, and the time for the last agent to arrive is denoted as $t_l$.

\section{UNIVERSAL LOWER BOUND}\label{sec:lb}

In this section, we show a universal lower bound on the travel time for the first agent in the swarm to reach the target in a constructed  scene. This lower bound
applies to any path generation algorithm and is a fundamental limitation of the achievable upper bound of the travel time for the first agent in the swarm to reach the target.

\begin{theorem}\label{thm:universal}
	For any algorithm of path generation for a swarm of $n\geq 2 $ agents and any strictly positive $\delta$, there is a scene such that the travel time for the first agent to reach the target satisfies
	\begin{equation}\label{eq:proof1}
		t_f\geq  t(D)+ \tfrac{1}{2} \sum t(p_i)-\delta,
	\end{equation}
	where $t_f$, $t(D)$, and $\sum t(p_i)$ are already defined.
\end{theorem}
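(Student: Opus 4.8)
The plan is to establish the bound by an \emph{adversary argument}: rather than exhibiting a single fixed scene that defeats all algorithms, I would, given an arbitrary path-generation algorithm and an arbitrary $\delta>0$, construct a scene tailored to that algorithm, revealing obstacle geometry only as the agents probe it (which is consistent with the tactile-sensing assumption) and showing that on this scene no agent can reach $T$ before time $t(D)+\tfrac12\sum t(p_i)-\delta$. Because the agents sense obstacles only on contact, the adversary may defer committing to the precise location of each obstacle's opening until the agents are forced to reveal it, and this deferral is the source of all the slack captured by $\delta$.

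The building block is a single-obstacle gadget: a thin barrier that separates $S$ from $T$ and is passable only through one narrow opening. I would take each barrier to have small spatial extent but large perimeter, for instance a tightly folded, comb-like thin wall, so that its boundary is a closed curve of length $p_i$ while its bounding box is negligible compared to $D$. This serves two purposes at once. First, since crossing the barrier requires following its boundary while the net displacement gained is negligible, the detour cannot be ``recycled'' as progress toward $T$ nor shortcut diagonally, so that the forward progress $D$ and the boundary detours \emph{add} rather than combine sublinearly. Second, it funnels the agents so that they must thread the barrier through essentially a single entry, which is what prevents the bound from degrading as $n$ grows.

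The quantitative core is the claim that the \emph{first} agent to pass the $i$-th barrier must traverse at least $\tfrac12 p_i$ of its boundary, up to a per-obstacle slack. Here I would model the barrier's boundary as a one-dimensional cycle of length $p_i$ and argue that, from a common contact region, the explored portion is at any time an arc; the agents can advance this arc in at most the two directions (clockwise and counterclockwise), so the opening, which the adversary keeps placing at the still-unexplored point, can be driven to the boundary-antipode of the entry. The first arrival then occurs after a boundary traversal of $\min(d,\,p_i-d)$, which the adversary maximizes at $d=\tfrac12 p_i$. The crucial feature is that additional agents only add more traversers along these same two fronts and cannot seed new independent fronts, so the first-arrival distance stays $\tfrac12 p_i$ for every $n\geq 2$.

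Finally I would chain $m$ such barriers in series along the segment $ST$, so that clearing barrier $k$ deposits the agents at the approach to barrier $k+1$. Any trajectory reaching $T$ must have net horizontal displacement $D$, giving travel time at least $t(D)$, and must additionally spend at least $t(\tfrac12 p_i)$ at each barrier by the previous step; summing these contributions yields $t_f\geq t(D)+\tfrac12\sum t(p_i)-\delta$ once the opening widths, corner roundings, and funnel footprints (all taken proportional to $\delta/m$) are absorbed into $\delta$. I expect the main obstacle to be the rigorous justification of the two-front limitation against a \emph{fully adaptive} algorithm with $n$ cooperating agents: one must show that no clever deployment of the extra agents, such as pre-positioning them at distinct boundary points before the opening is pinned down, can reduce the first-arrival boundary distance below $\tfrac12 p_i$. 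This is precisely what forces the small-footprint, single-entry geometry, and it is also what makes the lower bound coincide with the SwarmBug1 upper bound in the large-$n$ limit.
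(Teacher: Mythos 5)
Your proposal has a genuine gap, and it sits exactly where you flagged it: the ``two-front'' lemma. You model the barrier's boundary as a one-dimensional cycle and claim the explored portion is at any time a single arc advancing in at most two directions, so the adversary can push the opening to the antipode and force a boundary traversal of $\tfrac12 p_i$ before the first passage. But the boundary is a closed curve \emph{embedded in the plane}, and agents are not confined to it -- they reach it through free space. For your specific gadget (perimeter $p_i$ folded into a bounding box that is negligible compared to $D$) this is fatal: the gaps between the folds are part of free space and all lie within negligible travel distance of one another, so a swarm can scatter its $n$ agents over the bounding box and seed up to $n$ independent contact fronts, shrinking the adversary's unexplored region in time on the order of $t(p_i/n)$ plus negligible free-space travel, not $t(p_i/2)$. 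To truly serialize exploration you would need a non-branching corridor geometry (a spiral, say), but once the geometry is that rigid the opening's location is essentially forced, the adaptive adversary buys nothing, and you are back to a fixed scene. A secondary defect: a simulate-and-commit adversary handles deterministic algorithms cleanly, but the theorem as stated should also cover randomized ones, which would require additional work in your framework.

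The paper sidesteps all information-theoretic reasoning. It exhibits a single \emph{fixed} scene: one degenerate rectangle ($W=0$), i.e.\ a straight thin wall of length $2L$ and perimeter $\sum p_i = 4L$, centered at $S$, with $T$ directly behind it at distance $D$. It then lower-bounds the \emph{geometric shortest path itself}: any path from $S$ to $T$ must round an end of the wall, so $t_f \geq t(P^*) = t(L) + \sqrt{t(D)^2+t(L)^2} = \tfrac12\sum t(p_i) - t(L) + \sqrt{t(D)^2+t(L)^2}$, and choosing $D$ and $L$ so that $t(D)+t(L)-\sqrt{t(D)^2+t(L)^2} \leq \delta$ yields \eqref{eq:proof1}. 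Because this bounds even an omniscient offline planner, it holds for every algorithm, every $n$, and every randomization, with no adaptivity argument at all. Note also where the factor $\tfrac12$ really comes from in the paper: the perimeter $4L$ counts \emph{both sides} of the thin wall while the unavoidable detour costs only about $2L$ -- it is a bookkeeping fact about thin obstacles, not a statement about exploration proceeding on two fronts. If you wish to salvage your chained construction, chain copies of the paper's wall gadget rather than adaptive combs; but as the paper shows, a single wall already suffices.
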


\begin{proof} The proof is based on showing that for any path generation algorithm for a swarm of $n\geq 2$ agents, the generated paths will satisfy \eqref{eq:proof1} for the constructed scene illustrated in Fig. \ref{fig:Proof1}. The constructed scene includes one rectangular obstacle with width $W$ and length $2L$. Since the scene only includes one obstacle, $\sum p_i$ includes only one term. The start point is located on the midpoint at one of the obstacle's boundary sides. The target point has the same $x$-coordinate as the start point but is located on the other side of the obstacle (not necessarily on a boundary point). We neglect the width of the obstacle in the constructed scene by setting $W:=0$. We have $\sum p_i=4L$ and the time it takes to follow the shortest path between the start point and the target point is $t(P^*)~=~t(L)+\sqrt{t(D)^2+t(L)^2}$. Thus, independent of the choice of path generation algorithm, the time it takes for the first agent in the swarm to reach the target obeys $t_f\geq t(P^*)= \sum t(p_i)/2-t(L)+\sqrt{t(D)^2+t(L)^2}$. By setting $D$ and $L$ to satisfy $\delta\geq t(D)+t(L)-\sqrt{t(D)^2+t(L)^2}$ or equivalently $\sqrt{t(D)^2+t(L)^2} \geq t(D)+t(L)-\delta$, we  achieve $t_f\geq t(D)+\frac12\sum t(p_i) -\delta$. \end{proof}
	
\begin{figure}
		\centering
		\includegraphics[width=3.0in]{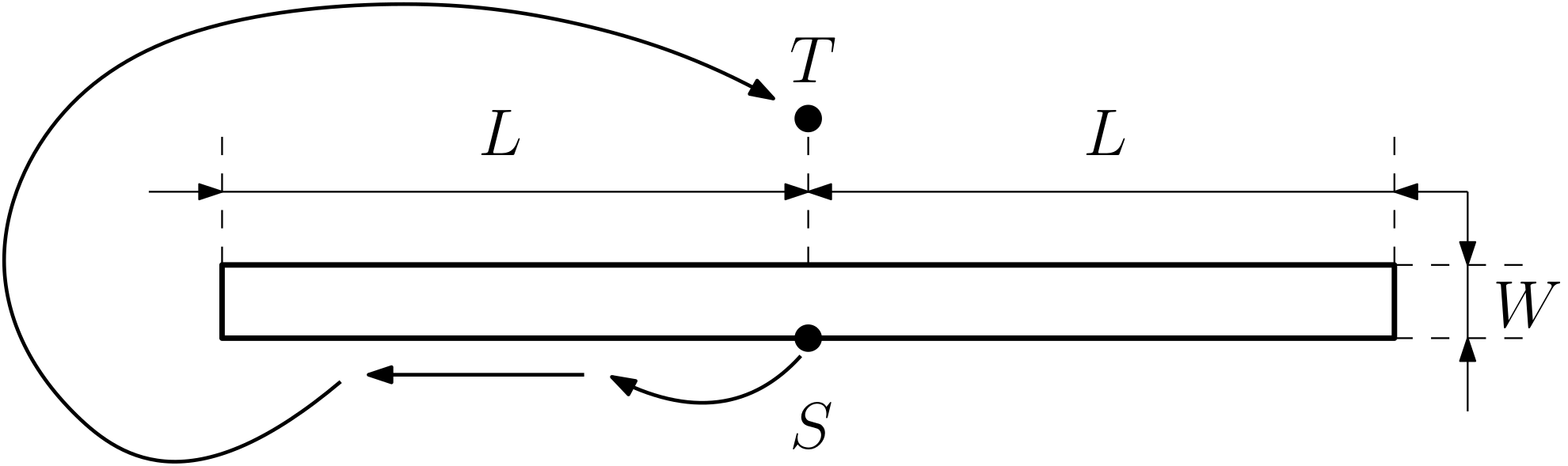}
		\caption{Constructed scene for Theorem 1. Notations in the figure are target point $(T)$, start point $(S)$, length of obstacle $(2L)$,  width of obstacle $(W)$, and the path of an agent in the swarm (arrows).}
		\label{fig:Proof1}
\end{figure}
	
\begin{remark} The universal lower bound in Theorem \ref{thm:universal} should be compared with the universal lower  bound for single agent path generation algorithms shown in \cite{Lumelsky1986} and \cite{Lumelsky1987}, which is $t_f~\geq~ t(D)~+~\sum t(p_i)-~\delta$.
\end{remark}

\begin{remark} The authors in \cite{Lumelsky1986} and \cite{Lumelsky1987} derived bounds of the path lengths of their algorithms. These bounds can be written regarding travel time if the single-agent travels at full speed along the generated paths. 
\end{remark}

\section{SwarmCom }\label{sec:sc}

The first path generation algorithm we propose, called SwarmCom, is inspired by Com in the seminal work \cite{Lumelsky1986} and \cite{Lumelsky1987} but extended to suit a swarm of agents instead of a single agent. The idea of SwarmCom (and of Com) is to travel towards the target in a straight line if an obstacle does not block this direction and otherwise follow the obstacle boundary clockwise or counterclockwise. Each time an obstacle is encountered, the swarm is divided into two groups; one group follows the boundary in a clockwise direction, and the other group follows the boundary in a counterclockwise direction. If only one agent (due to the swarm being divided at previous obstacles) encounters an obstacle, the agent takes either left or right, for example, by randomizing. The procedure of SwarmCom is as follows, see also Fig. \ref{fig:com}.

\begin{algorithm}[SwarmCom]

The swarm is initially located at the start point as one group.

\begin{enumerate}[Step 1:]
	\item Move in a straight line towards the target point until an obstacle is encountered. Then go to Step $2$. If the target is encountered, the procedure terminates. 
	
    \item  Split the group that encountered the obstacle into two groups. One group follows the boundary in clockwise direction and the other group in counterclockwise direction. Follow the boundary until the direction straight to the target is not blocked by the obstacle. Then go to Step 1.   

\end{enumerate}
\end{algorithm}

\begin{figure}
	\centering
	\includegraphics[width=1\linewidth]{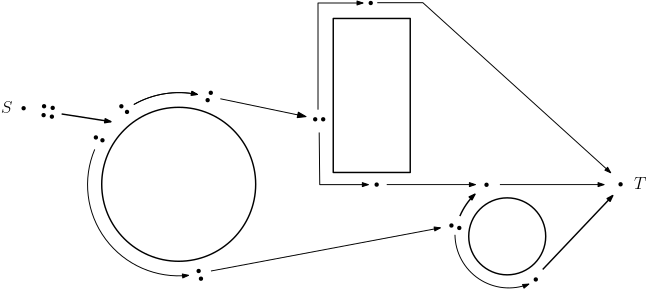}
	\caption{Illustration of the SwarmCom procedure.}
	\label{fig:com}
\end{figure}

Even though SwarmCom intuitively seems effective, there is no guarantee that any group of agents in the swarm will ever reach the target point. To illustrate this, consider the relatively simple scenario in Fig.~\ref{fig:Comfail}, including only one obstacle. The swarm will travel straight from the start point towards the target until it hits the obstacle. The swarm will then split into two groups. The group that traveled in the clockwise direction will end up at point $L_1$ where the direction straight to the target is not blocked and they will travel toward the target until they hit the obstacle again, and the group that traveled in the counterclockwise direction will have a similar behavior. The swarm will repeat this behavior indefinitely under SwarmCom and thus never reach the target.

\begin{figure}
	\centering
	\includegraphics[width=2in]{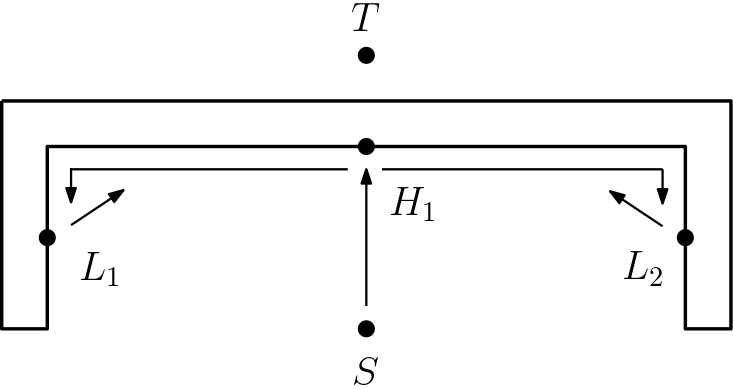}
	\caption{Trivial example scenario where SwarmCom fails to reach the target.}
	\label{fig:Comfail}
\end{figure}

\section{SwarmBug1 }\label{sec:sb1}

\subsection{Procedure of SwarmBug1}

The second path generation algorithm we propose, called SwarmBug1, is inspired by Bug1 in the seminal work \cite{Lumelsky1986} and \cite{Lumelsky1987} but extended to suit a swarm of agents instead of a single agent. The concept of SwarmBug1 (and of Bug1) is to extensively explore the complete boundary of each obstacle that the swarm encounters in the path from the start to the target point. The first point at obstacle $i$ that an agent in the swarm hits is denoted as $H_i$. The swarm leaves each obstacle~$i$ from its boundary point with a minimal distance to the target point. The point at which the swarm leaves obstacle~$i$ is denoted as $L_i$. 

The fact that the swarm includes many agents is exploited in two ways. The first is that the agents in the swarm are divided into pairs. One particular pair (the explorers) will explore an obstacle $i$ starting at $H_i$ by moving along its boundary in opposite directions (one will move in a clockwise direction and the other counterclockwise direction) and different to the explorer pair, the other pairs will not split up and instead move unified as illustrated in  Fig.~\ref{fig:ill3}.   The point at which the explorers meet after traversing half of the obstacle each is denoted as $M_i$. The second way the swarm is exploited is while the explorer agents explore the boundary of obstacle~$i$, the other pairs will spread out along the boundary of the obstacle to rapidly leave the obstacle once the explorers meet and $L_i$ is identified. More precisely, if the swarm of $n$ agents is located at obstacle $i$ and the explorers together have traversed a distance $x$ along the obstacle boundary, then the pairs aim to spread out along the obstacle boundary such that the distance to the closest neighboring pair (or an explorer) is $d(n,x)=2x/n$, as illustrated in Fig.~\ref{fig:ill3}. Note that the distance between neighboring pairs increases as the explorers move. In this way, once the explorers meet at $M_i$, the inter-pair distances are $2p_i/n$, and the path of the pair of agents closest to $L_i$ will be less than $p_i/n$. The pair of agents closest to $L_i$ will be assigned the explorers of the next obstacle. 

\begin{figure}
	\centering
	\includegraphics[width=0.3\textwidth]{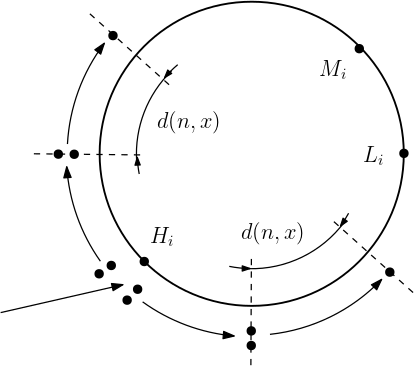}
	\caption{Five pairs of agents ($n=10$) arrive at obstacle $i$ and spread out over its boundary according to SwarmBug1 with the inter-pair distances $d(n,x)=2x/n$, where $x$ is the distance the explorers have traveled so far over the obstacle perimeter of obstacle $i$. The hit point $H_i$ is marked out in the figure as well as the meet point $M_i$ of the explorer pair and the leave point $L_i$, which will be identified once the explorers meet.}
	\label{fig:ill3}
\end{figure}

To execute the algorithm SwarmBug1, the explorers have to communicate and agree on which point along the boundary that is closest to the target point. Also, the swarm has to communicate and agree on which pair is closest to $L_i$ in order to assign a new pair as explorers. The procedure of SwarmBug1 is illustrated in Fig. \ref{fig:ill} and explained next.

\begin{algorithm}[SwarmBug1]\label{alg:SB1}
Let us first set $L_{0}=S$, the start point. The agents are divided into pairs and one pair of agents is set as explorers of the first obstacle.

\begin{enumerate}[Step 1:]
	\item The explorers of obstacle $i$ move toward the target in a straight line at full speed, starting at point $L_{i-1}$. The other agents aim to catch up with the explorers by taking the shortest already explored path at full speed from their positions to the explorers. When the explorers hit an obstacle, the hit point $H_{i}$ is defined. Then go to Step~$2$. If the explorers instead reach the target point, then the procedure terminates (the agents who have not yet arrived at $T$ will continue).	

	\item The explorers of obstacle $i$ move along the boundary of obstacle $i$ in opposite directions at full speed, starting at point~$H_{i}$. The pairs of agents that have reached point $H_{i}$ follow the explorers at full speed and each explorer is followed by every second pair.  The other agents aim to catch up by taking the shortest already  explored path at full speed from their positions to $H_{i}$. When all agents in the swarm have arrived at obstacle $i$, the agents aim to spread out to achieve equal distances between pairs, that is, distances $d(n,x)=2x/n$ between the pairs if the explorers have traveled a distance of $x$. When the explorers meet after having traversed half of the obstacle $i$'s boundary each, the meeting point $M_i$ and leave point $L_i$ are identified. Then go to Step 3. 
	
	\item  The pair of agents closest to $L_i$ is assigned the roles of explorers of obstacle $i+1$. All agents take the shortest already explored path from their positions to $L_{i}$. When the explorers of obstacle $i+1$ reach $L_{i}$,  go to Step 1.

\end{enumerate}
\end{algorithm}

\begin{figure}
	\centering
	\subfigure[Step 1 of SwarmBug1.]
	{
		\includegraphics[width=3.0in]{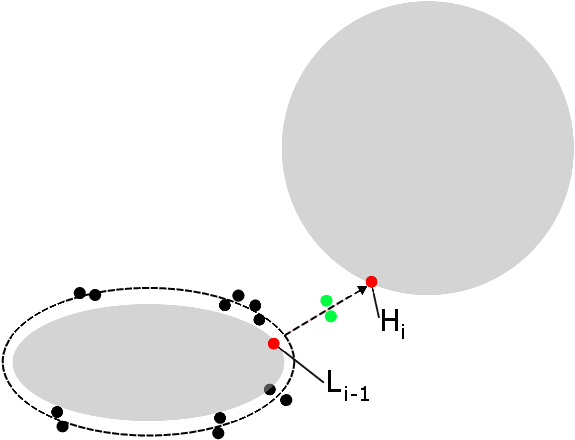}
		\label{fig:ill1}
	}
	\\
	\subfigure[Step 2 of SwarmBug1.]
	{
		\includegraphics[width=3.0in]{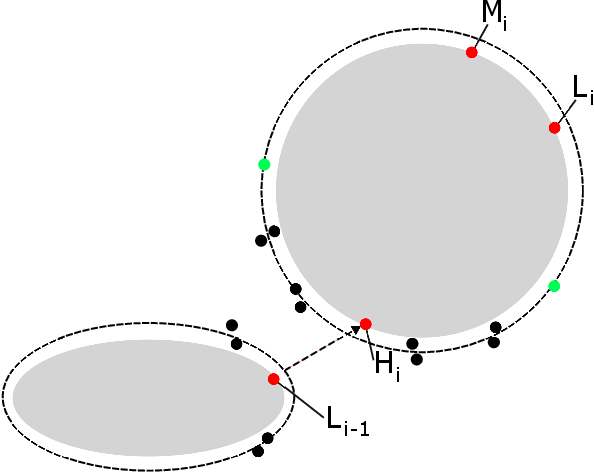}
		\label{ffig:ill2}
	}
	
		\subfigure[Step 3 of SwarmBug1.]
	{
		\includegraphics[width=3.0in]{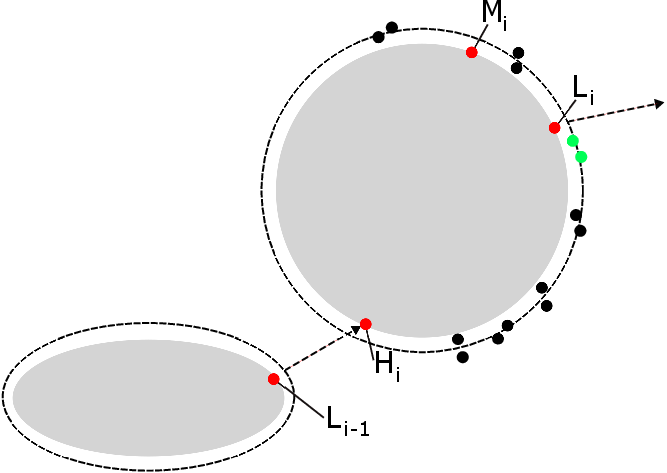}
		\label{ffig:ill3}
	}
	\caption{Illustration of the SwarmBug1 procedure. The steps are explained in Algorithm \ref{alg:SB1}.}
	\label{fig:ill}
\end{figure}

\subsection{Upper bounds of  SwarmBug1}

\begin{theorem}\label{thm:SB1}
Consider a swarm of $n$ agents using SwarmBug1. Then, the travel time for the first agent to reach the target satisfies

	\begin{equation}\label{eq:thm1a}
	t_f\leq t(D)+\tfrac12\sum \Big[t(p_i)+t\big(\min(2 \bar p/n,p_i )\big) \Big],
\end{equation}
where $ \bar p=\max\{p_i \}$ denotes the maximal obstacle perimeter in the environment. The travel time for the last agent to reach the target satisfies
\begin{equation}\label{eq:thm1b}
	t_l\leq t_f+  \tfrac12 t(\bar p).
\end{equation}

\end{theorem}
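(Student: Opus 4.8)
The plan is to track the relay of explorer pairs, since the first agent to arrive is the explorer that covers the terminal straight segment from the last leave point to $T$. I would decompose the arrival time $t_f$ of this lead along its critical timeline into three groups of contributions: (i) the straight ``free-space'' segments $L_{i-1}\to H_i$ together with the terminal segment $L_k\to T$; (ii) the boundary explorations, during which the two explorers traverse obstacle $i$ in opposite directions until they meet at $M_i$; and (iii) the ``leave'' moves, in which the pair assigned as the next explorers walks along the already-explored boundary from its spread-out position to $L_i$. Writing $t_f$ as the sum of these three groups reduces the theorem to bounding each group separately.

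For the straight segments I would use the Bug1-style telescoping argument. Because each such segment points straight at $T$ and terminates at the first hit point, its length equals $\mathrm{dist}(L_{i-1},T)-\mathrm{dist}(H_i,T)$, and because $L_i$ is by construction the boundary point of obstacle $i$ closest to $T$, we have $\mathrm{dist}(L_i,T)\le\mathrm{dist}(H_i,T)$. Summing and telescoping then collapses the total free-space length to at most $\mathrm{dist}(L_0,T)=D$, giving the $t(D)$ term. Each exploration contributes exactly $t(p_i/2)=\tfrac12 t(p_i)$, since the explorers split the perimeter in half at full speed, and summing yields the $\tfrac12\sum t(p_i)$ term.

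The leave term is where the swarm size enters and is the key step. While the explorers have jointly traversed a length $x$, the remaining pairs maintain spacing $d(n,x)=2x/n$; at the meeting instant $x=p_i$ the spacing is $2p_i/n$, so every boundary point---in particular $L_i$---lies within $p_i/n$ of some pair, and the nearest pair reaches $L_i$ in time at most $t(p_i/n)$. Since moreover no point is more than a half-perimeter from $L_i$ along the explored boundary, the leave distance is at most $\min(p_i/n,p_i/2)\le\min(\bar p/n,p_i/2)=\tfrac12\min(2\bar p/n,p_i)$, and summing produces the last term of \eqref{eq:thm1a}. I expect the main obstacle to be rigorously justifying that the prescribed spacing is actually attained by the meeting instant: one must check that the non-explorer pairs can redistribute over the growing explored arc at admissible speed (they move outward at rate at most half of full speed, so they can keep up), and that all pairs have in fact reached the obstacle in time---handling the case where a pair is still catching up from a previous obstacle, which is exactly what forces the conservative $\bar p$ in place of $p_i$ and the $p_i/2$ fallback inside the minimum.

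For the bound \eqref{eq:thm1b} on the last agent I would compare the trailing agent's timeline with the lead's. Every agent eventually funnels through the final leave point $L_k$ and then covers the identical straight segment $L_k\to T$ at full speed, so the lag at $T$ equals the lag in arriving at $L_k$. At the instant $L_k$ is identified, all agents sit on the boundary of obstacle $k$ and route to $L_k$ along the shortest explored path, of length at most a half-perimeter $p_k/2\le\bar p/2$; hence the trailing agent arrives at $L_k$ at most $t(\bar p/2)=\tfrac12 t(\bar p)$ after the lead, and this gap is preserved to $T$. The remaining subtlety, paralleling the leave-term argument, is to show the lag cannot accumulate across successive obstacles: since the agents move at full speed on every straight segment (creating no relative delay) and each boundary detour is at most a half-perimeter, the maximal lag ever produced is that of the largest obstacle, $\tfrac12 t(\bar p)$, which I would argue is maintained rather than exceeded as processing continues.
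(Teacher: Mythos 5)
Your treatment of \eqref{eq:thm1a} is essentially the paper's argument: the telescoping of the straight segments to $t(D)$ (which the paper inherits from the Bug1 analysis by noting SwarmBug1 has the same hit and leave points, rather than re-deriving it), the half-perimeter exploration cost $\tfrac12 t(p_i)$, and the leave walk bounded by half the inter-pair spacing with the explorer's half-perimeter backtrack as fallback. You also correctly diagnose the one real subtlety --- pairs still catching up from the previous obstacle when the explorers meet --- which is exactly what the paper handles by showing the spacing at $M_i$ is at most $\max(2p_{i-1}/n,\,2p_i/n)\leq 2\bar p/n$, yielding the leave distance $\min(\bar p/n,\,p_i/2)=\tfrac12\min(2\bar p/n,\,p_i)$. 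So the first half of your proposal matches the paper in substance.

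The gap is in your proof of \eqref{eq:thm1b}. Your premise that ``at the instant $L_k$ is identified, all agents sit on the boundary of obstacle $k$'' is false in general --- it fails in precisely the catch-up scenario you yourself flagged when proving \eqref{eq:thm1a}: trailing pairs may still be en route from earlier obstacles when the explorers meet. Your fallback claim, that the lag ``is maintained rather than exceeded as processing continues,'' is the entire content of the bound and is asserted, not proved; moreover, the naive version of it does not give the stated constant. The spacing invariant alone says consecutive pairs are within $2\bar p/n$ of each other along the route, so a single chain of $n/2$ pairs has total spread roughly $(n/2)\cdot 2\bar p/n=\bar p$, which only yields $t_l\leq t_f+t(\bar p)$. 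The paper recovers the factor $\tfrac12$ with a throughput argument your proposal lacks: because the pairs approached the last leave point from \emph{two} directions, the swarm arrives as two interleaved chains, so once the first pair reaches $T$, at least two pairs arrive in every period of length $t(2\bar p/n)$; with at most $n/4$ such periods needed for the $n/2$ pairs, $t_l-t_f\leq (n/4)\,t(2\bar p/n)=\tfrac12 t(\bar p)$. Without this two-directional streaming step (or an equivalent non-accumulation lemma), your argument for \eqref{eq:thm1b} does not go through as written.
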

\begin{proof} SwarmBug1 will visit the same obstacles as Bug1 and have the same hit and leave points.	As Bug1 was shown in \cite{Lumelsky1986} and \cite{Lumelsky1987} to visit each obstacle at most once, the same holds for SwarmBug1. We will therefore omit this part in the proof. In this proof, we show an upper bound of the total time to explore the obstacles' boundaries and to reach the leave points under SwarmBug1.

Initially, the swarm is at the start point, and all agents in the swarm arrive at $H_{1}$ at the same time. The explorers traverse one side each of the obstacle $i$ at full speed, and equal distances between the pairs are achieved by the other pairs traveling with suitable fractions of the full speed. When the explorers meet at $M_1$, the distance between the pairs is  $2p_1/n$. The pairs will then travel to $L_1$ and in a straight line towards the target until they hit $H_2$. Once the explorers reach $H_2$, at least two pairs of agents will arrive at $H_2$ within each time period of $t(2p_1/n)$ time units until all agents have arrived at $H_2$. This follows from that the pairs approach $L_1$ from two directions as illustrated in Fig. \ref{fig:ill1} and the distances between the pairs approaching from each side are less than or equal to $2p_1/n$. Every second pair that arrives at $H_2$ after the explorers will go to the right, and every second pair will go left as illustrated in Fig. \ref{ffig:ill2}. All agents travel at full speed until all agents have arrived at the obstacle, and note that until then, the distances between neighboring pairs are less than or equal to $2p_1/n$ as all agents are traveling at the same speed. Once all agents have arrived at $H_2$, the pairs aim to achieve equal distances between the pairs. This can, for example, be achieved, at the latest, when the explorers traveled a distance of $2p_1/n \cdot n/2$, by that all agents, except for the explorers, first stand still, and each pair starts driving at full speed when the pair in front is driving and the distance to the in front driving pair is $2p_1/n$. Once the distances between the pairs are equal, the pairs drive with suitable fractions of the full speed to keep the equal distances. Thus, if all agents have not arrived at the second obstacle when the explorers meet, the distances between the pairs are $2p_1/n$. If all agents have arrived when the explorers meet, the distances between pairs are less than or equal to  $2p_2/n$. When the explorers arrive at $M_2$, the distance between pairs is thus less than or equal to $\max(2p_1/n, 2p_2/n)$.
 
The time for the explorers to travel from $H_2$ to $M_2$ is $t(p_2/2)$ as they travel half of the perimeter each. The time for the closest pair to arrive at the leave point $L_2$ once the explorers meet is less than or equal to $t(\max(2p_1/n, 2p_2/n))/2$, but also bounded by $t(p_2/2)$ as this is the maximum time for the explorers to reach the leave point. In this proof, we focused on obstacle $1$ and $2$. However, by taking identical steps, this procedure can be repeated to any obstacle $i$ and eventually until the agents reach the target point. Thus the inequality \eqref{eq:thm1a} is satisfied. 

Let $\bar{p}:= \max \{p_i\}_i$. When the explorers arrive at the target point, the distances between the pairs are less than or equal to $2\bar p/n$, and for each period of time length $t(2\bar p/n)$, it will arrive at least two pairs of agents to the target point until all agents have arrived. This follows from that the pairs approached the leave point from two directions at the obstacle when the distance between pairs $2\bar p/n$ was achieved. The number of periods needed for $n$ agents to arrive is less than or equal to $n/4$. Thus, $t_f-t_l \leq n/4 \cdot t(2\bar p/n)$ or $t_f \leq t_l +  t(\bar p/2)$. Thus the inequality \eqref{eq:thm1b} is satisfied.\end{proof}

\begin{remark} The upper bound of SwarmBug1 is strictly lower than the upper bound of Bug1 shown by the authors in \cite{Lumelsky1986} and \cite{Lumelsky1987}, which is $t_f~\leq~ t(D)~+~3/2 \cdot \sum t(p_i)$. This follows from the fact that the upper bound in Theorem \ref{eq:proof1} is less than or equal to $t(D)~+~\sum t(p_i)$, which occurs when the number of agents in the swarm is two ($n=2$).
	\end{remark}

\begin{corollary}\label{thm:coro1}
Consider a swarm of $n$ agents using SwarmBug1. When $n\rightarrow \infty$, the travel time for the first agent to reach the target is
\begin{equation}\label{eq:cor1}
t_f\leq t(D)+\tfrac12\sum t(p_i).
\end{equation}
\end{corollary}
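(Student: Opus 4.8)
The plan is to obtain the corollary directly from the upper bound \eqref{eq:thm1a} of Theorem \ref{thm:SB1} by letting $n \to \infty$. The only structural facts about $t$ that I would use are that $t(x)$, being the time to travel a distance $x$ at maximum speed, is a linear (hence continuous and monotone) function with $t(0)=0$.

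First I would fix the environment, and therefore the finite collection of obstacle perimeters $\{p_i\}$ together with the maximal perimeter $\bar p=\max_i\{p_i\}$, and rewrite the right-hand side of \eqref{eq:thm1a} by splitting the bracketed sum:
\[
t_f \leq t(D) + \tfrac12 \sum t(p_i) + \tfrac12 \sum t\big(\min(2\bar p/n,\,p_i)\big).
\]
The first two terms are independent of $n$ and already coincide with the target bound \eqref{eq:cor1}, so it remains only to show that the last sum vanishes as $n\to\infty$.

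Next I would argue termwise. For each fixed obstacle $i$, once $n$ is large enough that $2\bar p/n \leq p_i$, the minimum equals $2\bar p/n$, and this quantity tends to $0$ as $n\to\infty$. By continuity of $t$ together with $t(0)=0$, each summand $t\big(\min(2\bar p/n,\,p_i)\big)$ tends to $0$. Since the environment contains finitely many obstacles, the finite sum of these vanishing summands also tends to $0$, and as the inequality is preserved under the limit, this yields \eqref{eq:cor1}.

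The one point requiring care is the interchange of the limit with the summation. I expect this to be the (mild) main obstacle: with finitely many obstacles it is immediate, and it is justified by the standing assumption that $\sum t(p_i)$ represents a finite total traversal time, i.e. that the number of obstacles is finite. (If one wished to allow countably many obstacles with $\sum t(p_i)<\infty$, a dominated-convergence argument using the bound $t(\min(2\bar p/n,p_i))\leq t(p_i)$ would close the gap.)
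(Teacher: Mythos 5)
Your proposal is correct and is essentially the paper's own argument: the paper's proof simply notes that $2\bar p/n \rightarrow 0$ as $n\rightarrow \infty$ and lets the conclusion follow from the upper bound \eqref{eq:thm1a} of Theorem~\ref{thm:SB1} (the paper's citation of ``Theorem \ref{eq:proof1}'' there is a mislabeling of Theorem~\ref{thm:SB1}). Your additional care about continuity of $t$, the behavior of the $\min$, and the finiteness of the sum merely makes explicit what the paper leaves implicit.
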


\begin{proof}
Note that $2\bar p/n \rightarrow 0$ as $n\rightarrow \infty$. The conclusion follows from Theorem \ref{eq:proof1}. \end{proof}

\begin{remark}
	The upper bound in Corollary \ref{thm:coro1} coincides with the universal lower bound in Theorem \ref{eq:proof1} in the limit $\delta \rightarrow 0$. Hence, no path generation algorithm has a better worst-case performance than SwarmBug1 in the limit case when the swarm size approaches infinity ($n\rightarrow \infty$).   
\end{remark}

\begin{corollary}
Consider a swarm of $n$ agents using SwarmBug1. When all the obstacles have the same perimeter, the travel time for the first agent to reach the target satisfies
\begin{equation*}
	t_f\leq t(D)+ m t(p) \left(\frac{1}{2}+\frac{1}{n}\right),
\end{equation*}
where $m$ denotes the number of obstacles and $p$ denotes the perimeter of the obstacles.

	\end{corollary}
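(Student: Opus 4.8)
The plan is to obtain this corollary as a direct specialization of the general upper bound already established in Theorem~\ref{thm:SB1}, equation~\eqref{eq:thm1a}, rather than by any new argument. Since every one of the $m$ obstacles now shares the common perimeter $p$, the maximal perimeter is $\bar p = p$ and the sum in~\eqref{eq:thm1a} consists of $m$ identical summands. The leading contribution $\tfrac12 \sum t(p_i)$ therefore collapses at once to $\tfrac{m}{2}\,t(p)$.

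The substantive work is to evaluate the correction term $\tfrac12 \sum t\big(\min(2\bar p/n, p_i)\big)$. First I would argue that the minimum always selects its first argument: because the agents are organized into pairs we have $n \geq 2$, hence $2p/n \leq p$ and $\min(2p/n,p) = 2p/n$. Thus each of the $m$ terms reduces to $t(2p/n)$. The one point that genuinely requires care is rewriting $t(2p/n)$ as a multiple of $t(p)$. Here I would invoke the definition of $t$ as travel time at the constant maximum speed, so that $t$ is proportional to distance and obeys the scaling identity $t(cx)=c\,t(x)$ for any $c>0$. Applying this with $c=2/n$ gives $t(2p/n)=\tfrac{2}{n}\,t(p)$, whence the correction sum equals $\tfrac12 \cdot m \cdot \tfrac{2}{n}\,t(p) = \tfrac{m}{n}\,t(p)$.

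Combining the two contributions yields $t_f \leq t(D) + \tfrac{m}{2}\,t(p) + \tfrac{m}{n}\,t(p) = t(D) + m\,t(p)\big(\tfrac12 + \tfrac1n\big)$, which is the claimed bound. The whole derivation is thus a routine substitution into~\eqref{eq:thm1a}, and the only step I would flag is the scaling property of $t$, which relies on the constant-maximum-speed model introduced in Section~\ref{sec:sm}; under that model the identity $t(2p/n)=\tfrac{2}{n}t(p)$ is exact, and no further estimation is needed.
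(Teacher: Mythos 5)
Your proposal is correct and takes essentially the same approach as the paper, which likewise obtains the corollary by direct substitution of $\bar p = p_i = p$ into the bound \eqref{eq:thm1a} of Theorem~\ref{thm:SB1}. Your explicit justifications---that $\min(2p/n,p)=2p/n$ since the pairing forces $n\geq 2$, and that constant maximum speed gives the exact scaling $t(2p/n)=\tfrac{2}{n}\,t(p)$---merely spell out details the paper's one-line proof leaves implicit.
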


\begin{proof} Note that if all obstacles have the same perimeter $p$, then  $\bar p=p_i=p$. The conclusion follows from Theorem~\ref{thm:SB1}. \end{proof} 

\section{SwarmBug2}\label{sec:sb2}

\subsection{Procedure of SwarmBug2}

The last path generation algorithm we propose in this paper is SwarmBug2, a swarm extension of  Bug2 in  \cite{Lumelsky1986} and \cite{Lumelsky1987}. The concept of SwarmBug2 (and of Bug2), is to leave obstacles when crossing the line intersecting $S$ and $T$ and then travel along this line towards $T$ until an obstacle is encountered, see Fig. \ref{fig:bug2}. The line intersecting $S$ and $T$ is called the $M$-line and it is known by the agents as they know $S$ and $T$. Moreover, the agents know when they cross the $M$-line as they know their positions. The number of intersections between the $M$-line and the boundary of obstacle $i$ is denoted $m_i$. The number $m_i$ is even as closed curves define the obstacles. Due to the logic of SwarmBug2, each point that is an obstacle boundary  intersection with the $M$-line is either a leave point, a hit point, or neither. SwarmBug2 is a more greedy path generation algorithm than SwarmBug1, which explores the complete boundary of each encountered obstacle but is less greedy than SwarmCom, which leaves obstacles as soon as the direction towards the target point is free. It is worth noting that, different from SwarmBug1, under SwarmBug2, there might be more than one hit point at each obstacle. Thus, instead of referring to hit and leave points at obstacle $i$, $H_i$ and $L_i$ refer to the $i$th hit and leave the point in the generated path, respectively. 

\begin{figure}
	\centering
	\includegraphics[width=0.60\linewidth]{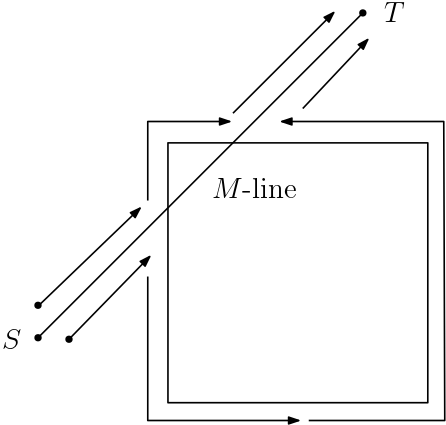}
	\caption{Illustration of the SwarmBug2 procedure.}
	\label{fig:bug2}
\end{figure}

Swarmbug2 exploits the fact that a swarm includes many agents through the splitting of a group (initially the whole swarm) into two when an obstacle is encountered.  The two groups leave the hit point and move around the obstacle in opposite directions (one group moves clockwise and one group moves counterclockwise).  The implementation of SwarmBug2 requires that the number of agents $n$ is large enough such that it allows us to split the swarm into two groups every time it arrives at a hit point (any group splits at most once at each hit point). More precisely, we require the number of agents to satisfy  
\begin{align*}
	n\geq2^{\frac12\sum m_i}.
\end{align*}  
The procedure of SwarmBug2 is given next.

\begin{algorithm}[SwarmBug2]\label{alg:swarmbug2} The swarm is initially located at the start point as one group.  Let us first set $L_0=S$ and $n_0=n$.	\begin{enumerate}[Step 1:] 
	
		\item The group at $L_{i-1}$ moves towards the target $T$ along the $M$-line until an obstacle is encountered and $H_{i}$ is defined. Then go to Step 2. If the group instead arrives at $T$,  the algorithm terminates.  
		\item At the hit point $H_{i}$, the group splits into two, one group of size $\lceil n_i/2\rceil$ and the other of size $\lfloor n_i/2\rfloor$, where $n_i$ denotes the number of agents in the group when it arrives at $H_{i}$. One of the two groups moves clockwise around the encountered obstacle and the other counterclockwise. Each group continues moving along the obstacle until a point on the $M$-line is encountered, say $X$, with the direction towards $T$ free and $\|X-T\|<\|H_{i}-T\|$. Then set  $L_{i}=X$ and go to Step~1.

	\end{enumerate}
\end{algorithm}

The algorithm SwarmBug2 is well-posed in the sense that it terminates with all agents having reached $T$ after a finite time. The time of the slowest group is bounded above as
\begin{align}\label{eq:cycles}
	t_l\leq t(D)+\tfrac12\sum m_it(p_i).
\end{align}
To show this, we can modify the algorithm Bug2 in \cite{Lumelsky1986} and \cite{Lumelsky1987} such that the direction (clockwise or counterclockwise) is chosen randomly at each hit point. Then, following the reasoning in \cite{Lumelsky1986} and \cite{Lumelsky1987}, it is easy to verify that the bound \eqref{eq:cycles} applies to this version of Bug2 for any realization of directions. The path of each group in the multi-agent case corresponds to one realization of the random algorithm.

\subsection{Bounds of SwarmBug2}

\begin{proposition}\label{thm:prop11}
Consider a swarm of $n$ agents using SwarmBug2. For any strictly positive $\varepsilon$, there is a scene consisting of a single obstacle such that the travel time for the first agent to reach the target satisfies
\begin{align}\label{eq:prop8}
t_f\geq t(D)+(1-\varepsilon)t(p).
\end{align}
\end{proposition}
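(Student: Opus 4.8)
The plan is to exhibit a single thin obstacle for which SwarmBug2 is forced to drive \emph{every} group, and in particular the fastest one, almost all the way around the obstacle boundary before any agent can reach $T$. Working against us is the splitting rule: at each hit point the group divides and the two halves circle the obstacle in opposite directions, so within a \emph{single} hit--leave cycle the faster half reaches the nearest admissible leave point after an arc of length at most $p/2$ (the two arcs from $H_i$ to that leave point sum to the full perimeter $p$). A single convex pocket therefore only forces $\tfrac12 t(p)$, which is exactly the SwarmBug1 figure of Theorem~\ref{thm:SB1}; to reach $(1-\varepsilon)t(p)$ I must chain \emph{two} cycles whose forced arcs are essentially complementary halves of the boundary.

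Concretely, I would take the obstacle to be a thin spiral of width $w$ (a bracket wound through about one and a half turns), placed so that the $M$-line pierces it at four points, i.e.\ $m=4$ for this single obstacle. Threading out from $S$, the line first meets the inner wall, defining $H_1$; by construction the only admissible leave point, namely a point on the $M$-line that is closer to $T$ than $H_1$ and from which the direction to $T$ is locally free, lies diametrically opposite $H_1$ along the boundary, so \emph{both} halves must traverse an arc $\approx p/2$ to reach $L_1$. After leaving $L_1$ the fast half advances along the $M$-line only to re-enter the spiral at a second hit point $H_2$; again the geometry is chosen so that its nearest admissible leave point $L_2$ sits a further $\approx p/2$ away, along the complementary half of the boundary. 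The total boundary distance travelled by the first agent to reach $T$ is then $\approx p$, while, because the obstacle is thin, the straight $M$-line portions of its path sum to $\approx D$.

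Assembling these pieces and using linearity of $t(\cdot)$ gives $t_f \approx t(D)+t(p/2)+t(p/2)=t(D)+t(p)$; choosing $w$ small enough, and absorbing the $O(w)$ arcs that cross the walls, the neglected interior segments of the $M$-line, and the small over/undershoot near each crossing, makes the deficit smaller than $\varepsilon\, t(p)$, which yields \eqref{eq:prop8}. This should be contrasted with \eqref{eq:cycles}, which only bounds the \emph{slowest} group; the content here is a matching \emph{lower} bound on the \emph{fastest} one.

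I expect the crux to be verifying that the splitting genuinely provides no shortcut, i.e.\ that neither counter-rotating half can escape through an \emph{early} admissible leave point in either cycle. This amounts to checking, for the specific spiral, that every boundary point lying on the $M$-line and closer to $T$ than the current hit point is either of ``entering'' type (so the direction to $T$ is blocked and the point is inadmissible) or is the intended antipodal leave point; only then are both halves forced around the full $\approx p/2$ arc. Certifying this, together with the $w\to 0$ bookkeeping that converts the approximate equalities into the clean bound \eqref{eq:prop8}, is the main technical obstacle.
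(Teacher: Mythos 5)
Your guiding principle is sound---one hit--leave cycle can force the fastest subgroup around at most $p/2$ of the boundary, so reaching $(1-\varepsilon)t(p)$ requires chaining cycles---but the two-cycle spiral cannot enforce the chaining, and the step you yourself flag as the crux (``neither counter-rotating half can escape through an early admissible leave point'') is precisely where it fails. For a thin wall of curve length $\ell$ (perimeter $p\approx 2\ell$), the four $M$-line crossings come in face-pairs: $H_1$ and $L_1$ are the two faces of the wall at one wall parameter $s_1$, and $H_2$ and $L_2$ the two faces at another parameter $s_2\neq s_1$. Parametrize the boundary as side A ($u=s$, $s\in[0,\ell]$) followed by side B ($u=2\ell-s$). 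Making $H_1$ and $L_1$ antipodal forces $s_1=\ell/2$; then, since $s_2\neq\ell/2$, the pair $H_2$, $L_2$ cannot also be antipodal. Worse, the boundary order from $H_1$ in the increasing-$u$ direction is $s_1,\,s_2,\,\ell,\,2\ell-s_2,\,2\ell-s_1$, so the subgroup traveling that way passes \emph{both} faces of the second crossing---in particular the exit face $L_2$---strictly before it would reach $L_1$. The point $L_2$ is already admissible in cycle~1: it lies on the $M$-line, is closer to $T$ than $H_1$, and has a free direction to $T$. Hence one cycle-1 subgroup leaves at $L_2$ after boundary travel less than $p/2$ and flies unobstructed to $T$. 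Since SwarmBug2 splits at every hit point, $t_f$ is the minimum over all direction choices, and this early exit caps your scene at roughly $t(D)+\tfrac12 t(p)$---no better than a single pocket. The obstruction is structural, not bookkeeping: any ``deep'' later cycle plants an admissible leave point on the boundary that short-circuits some counter-rotating subgroup of an earlier cycle, so two complementary forced half-loops cannot coexist on a single closed curve.

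The paper's proof takes the opposite design: many shallow cycles instead of two deep ones. Its single obstacle has $k$ ``combs,'' each containing a through-path of length $p/(k+1)$ (with the remaining boundary also of length $p/(k+1)$ and vertical extents absorbed into $D$), arranged so that at each hit point $H_j$ bypassing comb $j$ costs at least $p(j-1)/(k+1)$, i.e., strictly more than threading it. Consequently every group---including the fastest, whichever turns it takes---must traverse combs $2$ through $k$, accumulating boundary travel $p(k-1)/(k+1)$ and giving $t_f\geq t(D)+\bigl(1-2/(k+1)\bigr)t(p)$; choosing $k$ with $2/(k+1)\leq\varepsilon$ yields \eqref{eq:prop8}. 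This makes the ``no shortcut'' certification you defer a built-in feature of the geometry (the alternative route is longer at each decision point by construction), rather than a condition to be verified---which, as shown above, is actually false for the spiral.
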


\begin{proof}Consider the scene in Fig.~\ref{fig:zigzag}, consisting of one obstacle with perimeter $p$ and $k$ number of `combs', which are parts of the obstacle with arbitrarily large lengths. Here, $p$ is considered to be the inner circumference of the obstacle since there is no way for the agents to reach the outside. We construct the scene such that the horizontal path in each comb has length $p/(k+1)$ and the rest of the obstacle also has length $p/(k+1)$. The vertical part of the combs are accounted for by the parameter $D$. This scene design is possible as we can arbitrarily select the horizontal path length in each comb and select the number of comb teeth in each comb and the intermediate distances between the comb teeth and their widths. For the considered scene, at each hit point $H_j$, there is no benefit in going in the direction that avoids comb~$j$ (for example, there is no benefit in going left at $H_3$ in Fig.~\ref{fig:zigzag}) since this would incur an extra length of at least $p(j-1)/(k+1)$ for bypassing the $j$th comb whose length is $p/(k+1)$. Therefore, the path for the fastest group to reach the target under SwarmBug2 is $S$-$H_1$-$H_2$-$L_2$-$H_3$-$L_3$-\ldots-$H_k$-$L_k$-$T$, which only bypasses the first comb as this does not incur an extra length. The length of the total horizontal path in combs number $2$ to $k$ is $p(k-1)/(k+1)$. Hence, the travel time for the first group to reach the target is 
\begin{align*}
t_f&\geq t(D)+t(p)(k-1)/(k+1)\\
&=t(D)+(1-2/(k+1))t(p),
\end{align*}
as the vertical movement is at least $D$. The parameter $k$ is set to satisfy $\varepsilon\geq 2/(k+1)$. Thus, we  have  $t_f\geq t(D)+(1-\varepsilon)t(p)$ and the conclusion follows.\end{proof}

\begin{figure}
\centering
\includegraphics[width=0.231\textwidth]{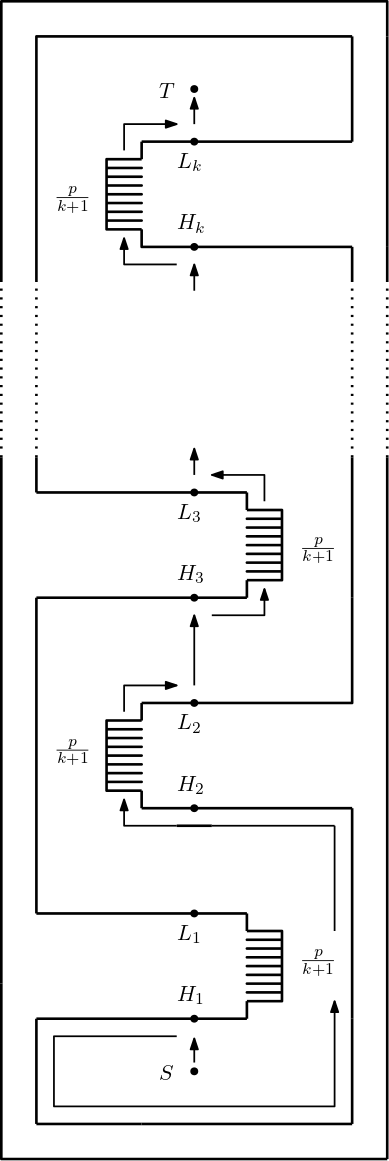}
\caption{Constructed scene for Proposition \ref{thm:prop11}. The trajectory ($S$-$H_1$-$H_2$-$L_2$-$H_3$-$L_3$-\ldots-$H_k$-$L_k$-$T$) of the group with the shortest path under SwarmBug2 is marked out.  }
\label{fig:zigzag}
\end{figure}

\begin{remark} Consider the scene in Fig. \ref{fig:zigzag}. The lower bound of the time for the first agent in the swarm to reach the target under SwarmBug2 in the limit $\varepsilon\rightarrow0$ is larger than the upper bound of SwarmBug1 given by \eqref{eq:thm1a}. Therefore, SwarmBug1 outperforms SwarmBug2 in terms of worst-case performance, and we thus do not derive any upper bound for the fastest group of SwarmBug2.
\end{remark}

Next, we give an upper bound of the time for the fastest group to reach the target when the $M$-line from $S$ to $T$ intersects each obstacle at a maximum of $2$ points ($m_i=2$). This holds, for example, when obstacles are convex.

\begin{proposition}\label{th:intersects}
Consider a swarm of $n$ agents using SwarmBug2.  Suppose that the $M$-line from $S$ to $T$ intersects each obstacle at a maximum of $2$ points. Then, the travel time of the first group to reach the target  satisfies
\begin{align}
t_f&\leq t(D)+\tfrac12\sum t(p_i).\label{eq:lower}
\end{align}
Moreover, the travel time of the slowest group satisfies
\begin{align*}
t_l&\leq t_f+\tfrac12\sum t(p_i).
\end{align*}
\end{proposition}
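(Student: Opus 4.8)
The plan is to prove Proposition~\ref{th:intersects} by exploiting the assumption $m_i=2$, which forces each obstacle to behave like the single convex-type obstacle analyzed in Theorem~\ref{thm:universal} and the SwarmBug1 analysis. Since the $M$-line crosses each obstacle boundary in exactly two points, every obstacle that is encountered has a single hit point $H$ (where the $M$-line first meets the boundary) and a single leave point $L$ (the second crossing, on the far side). At the hit point the group splits into two subgroups traveling clockwise and counterclockwise; together they traverse the \emph{entire} boundary of perimeter $p_i$, and each subgroup covers one of the two arcs. The faster subgroup is the one traversing the shorter arc, whose length is at most $p_i/2$. This is the crucial observation: with $m_i=2$, whichever direction reaches $L$ first does so in time at most $t(p_i/2)=\tfrac12 t(p_i)$, because the two arcs partition the perimeter.

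First I would establish the per-obstacle bound for the fastest group. Between leave points the fastest representative travels along the $M$-line, and the total straight-line travel along the $M$-line sums to $t(D)$ (since leave points lie on the $M$-line strictly closer to $T$, the horizontal progress telescopes to the full distance $D$). For each obstacle, the fastest subgroup adds boundary-following time of at most $\tfrac12 t(p_i)$ by the arc-splitting argument above. Summing over all obstacles and adding the $M$-line travel gives
\begin{equation*}
t_f\leq t(D)+\tfrac12\sum t(p_i),
\end{equation*}
which is \eqref{eq:lower}. I would also invoke the fact, already noted for SwarmBug2, that with $m_i=2$ each obstacle contributes a single hit/leave pair and no obstacle is revisited, so the sum runs over distinct obstacles without duplication.

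For the slowest group I would compare it against the fastest group at each obstacle. At a given obstacle the slow subgroup traverses the \emph{longer} arc, of length at most $p_i$ (trivially) but more usefully at most $p_i$ minus the short arc; the difference between the slow and fast traversal times at that obstacle is at most $\tfrac12 t(p_i)$, since the long arc exceeds the short arc by at most the full perimeter and their sum is exactly $p_i$. Accumulating these per-obstacle discrepancies, the slowest group lags the fastest by at most $\tfrac12\sum t(p_i)$, giving
\begin{equation*}
t_l\leq t_f+\tfrac12\sum t(p_i).
\end{equation*}
The main obstacle I anticipate is making the telescoping of $M$-line travel rigorous: one must verify that the monotone decrease of $\|X-T\|$ at successive leave points, together with the fact that all hit and leave points are collinear on the $M$-line, forces the total along-line distance to be exactly $D$ rather than something larger, and that no backtracking along the $M$-line occurs. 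Handling the bookkeeping for which subgroup is fastest at each obstacle—and confirming the fastest overall group is the concatenation of locally-fastest choices—is the delicate part, since the identity of the leading group may switch between obstacles.
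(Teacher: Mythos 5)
Your proof of \eqref{eq:lower} is essentially the paper's argument: with $m_i=2$ the $M$-line cuts each encountered obstacle boundary into two arcs summing to $p_i$, there is a single hit/leave pair per obstacle and no cycles, the along-line travel totals at most $t(D)$, and the group that always takes the shorter arc pays at most $\tfrac12 t(p_i)$ per obstacle. That part is sound and matches the paper.

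The gap is in your argument for the second inequality. Your per-obstacle step --- that the slow subgroup lags the fast one by at most $\tfrac12 t(p_i)$ ``since the long arc exceeds the short arc by at most the full perimeter and their sum is exactly $p_i$'' --- is a non sequitur. Writing $s_i$ and $\ell_i$ for the short and long arcs, $s_i+\ell_i=p_i$ gives $\ell_i-s_i=p_i-2s_i$, which is at most $\tfrac12 p_i$ only when $s_i\geq p_i/4$. Nothing forces that: if the $M$-line clips a long thin obstacle near one extremity, then $s_i\approx 0$ and $\ell_i\approx p_i$, so the per-obstacle lag approaches the full $t(p_i)$, and accumulating your per-obstacle discrepancies honestly yields only $t_l\leq t_f+\sum t(p_i-2s_i)\leq t_f+\sum t(p_i)$, not the claimed half-sum. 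The paper's own proof does not attempt your pointwise fast-versus-slow comparison at all: it bounds the slow group's arc at each obstacle directly by $p_i$, which gives $t_l\leq t(D)+\sum t(p_i)$, and the displayed bound $t_l\leq t_f+\tfrac12\sum t(p_i)$ is then obtained by measuring this against the \emph{upper bound} $t(D)+\tfrac12\sum t(p_i)$ on $t_f$ from \eqref{eq:lower}, rather than against the realized $t_f$ as you do. (Note that this substitution is itself only valid when $t_f$ attains its bound; your thin-obstacle configuration, where $t_f\approx t(D)$ but $t_l\approx t(D)+t(p)$, shows the pointwise version you tried to prove cannot be salvaged.) The statement your telescoping-plus-arc argument genuinely supports is $t_l\leq t(D)+\sum t(p_i)$, equivalently $t_l\leq t_f+\sum t(p_i)$.
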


\begin{proof}
Under the assumptions, for each obstacle, there can only be one hit point and one leave point, wherefore, there cannot be any cycles. The $M$-line divides the obstacle into two separate parts. The length of the shortest of the two resulting paths along the circumference of the obstacle is at most $p_i/2$. The fastest group will travel along this path for every obstacle. The longest of the two resulting paths is less than $p_i$. The slowest group will travel along this path for every obstacle.
\end{proof}

\begin{remark} By comparing the upper bounds in Theorem~\ref{thm:SB1} and Proposition~\ref{th:intersects}, SwarmBug2 has strictly better worst-case performance than SwarmBug1 for a finite number of agents in the particular case when the $M$-line from $S$ to $T$ intersects each obstacle in a maximum of $2$ points. However, as has already been discussed, SwarmBug1 has a strictly better worst-case performance than SwarmBug2 in general cases.  
\end{remark}

Now we consider the more general case of scenes with only out-obstacles. An obstacle is called an out-obstacle if  $S$ and $T$ do not belong to its convex hull. An obstacle for which this does not hold is referred to as an in-obstacle. For example, an out-obstacle scene could be an outdoor urban environment where the obstacles are buildings, and an in-obstacle scene could be an indoor scene with walls enclosing the $S$ and $T$ points.

\begin{proposition}\label{th:out-obstacle}
Consider a swarm of $n$ agents using SwarmBug2.  Suppose that all obstacles are out-obstacles. Then, the travel time of the first group to reach the target  satisfies
\begin{align*}
t_f\leq t(D)+\tfrac12\sum t(p_i).
\end{align*}
\end{proposition}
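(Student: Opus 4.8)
The plan is to treat each out-obstacle separately and to bound the boundary-following time of the fastest lineage of groups by $\tfrac12 t(p_i)$; summing over obstacles and charging the straight $M$-line segments (whose total time is at most $t(D)$) then yields the claim, composing the single-obstacle estimates exactly as in the reasoning behind \eqref{eq:cycles} and Proposition~\ref{th:intersects}. Fix an obstacle $O$, put the $M$-line on the $x$-axis with $T$ in the $+x$ direction, and let $c_1<\cdots<c_{m_i}$ be the abscissae of its crossings with the $M$-line (their number $m_i$ being even), so $H=c_1$ is the first hit. Since $c_{m_i}$ is the last crossing, beyond it the $M$-line does not meet $O$ again and runs clear to $T$; as $c_{m_i}>c_1$, it is a valid final leave point in the sense of Algorithm~\ref{alg:swarmbug2}.

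The idea is to follow the two lineages obtained by fixing a rotational sense: the \emph{upper} lineage always takes, at each hit, the branch that first enters $y>0$, and the \emph{lower} lineage the branch entering $y<0$. First I would verify that each is a genuine SwarmBug2 path reaching $T$: a lineage leaves the obstacle at the first crossing it meets satisfying the leave condition of Algorithm~\ref{alg:swarmbug2}, slides along the $M$-line to the next hit, and repeats; the strictly decreasing distances-to-$T$ at successive leave points rule out cycles, so after finitely many steps the lineage reaches a leave point past $c_{m_i}$ and then $T$. The decisive geometric claim is then that, because $O$ is an out-obstacle, the upper lineage performs all of its boundary following in the closed upper half-plane and the lower lineage all of its in the closed lower half-plane. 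Granting this, the two lineages traverse disjoint portions of $\partial O$, so the sum of their boundary-following lengths is at most $p_i$; the $M$-line shortcuts they take only shorten them further, whence the faster of the two has boundary-following time at most $\tfrac12 t(p_i)$.

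The hard part is precisely this separation claim, and it is where the out-obstacle hypothesis does all the work. I would prove it through the convex hull $\mathcal H$ of $O$: since $S$ and $T$ lie outside $\mathcal H$, the $M$-line meets $\partial\mathcal H$ in a single chord whose endpoints split $\partial\mathcal H$ into an upper and a lower arc, and I would show that the upper lineage never has to cross below the $M$-line to make forward progress, because any boundary pocket that would force such a crossing opens toward the $M$-line and hence supplies an earlier valid leave point, which the lineage takes as a shortcut. This is exactly the property that fails for an in-obstacle: in the enclosure of Proposition~\ref{thm:prop11} the obstacle wraps around $S$ and $T$, both rotational choices are funnelled through essentially the same comb, their detours overlap rather than being complementary, and the bound degrades to $t(D)+(1-\varepsilon)t(p)$. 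Once the separation and the reachability of a clear exit are established, the pigeonhole step $\min(\text{upper},\text{lower})\le\tfrac12 t(p_i)$ is immediate, and composing over all obstacles gives $t_f\le t(D)+\tfrac12\sum t(p_i)$.
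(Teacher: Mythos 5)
Your overall architecture matches the paper's: both arguments isolate one-sided lineages (groups that always take the same rotational choice at hit points, which exist thanks to the standing assumption $n\geq 2^{\frac12\sum m_i}$), use the out-obstacle hypothesis to conclude that such a lineage follows boundary only on its own side of the $M$-line, and finish with a pigeonhole giving the factor $\tfrac12$. The gap is that your ``separation claim'' --- which you correctly flag as the crux --- is exactly the point the paper does not reprove but imports from Lumelsky and Stepanov: for Bug2, an obstacle in out-position generates a path lying on one side of the $M$-line. What you offer in its place is not a proof. The assertion that ``any boundary pocket that would force such a crossing opens toward the $M$-line and hence supplies an earlier valid leave point'' is unsubstantiated: a point where the followed boundary meets the $M$-line is a valid leave point only if the direction toward $T$ is free there \emph{and} the point is strictly closer to $T$ than the current hit point, and nothing in your convex-hull observation (the $M$-line cutting $\partial \mathcal{H}$ in a single chord) rules out the upper traversal reaching a crossing where the obstacle blocks the direction to $T$, or a crossing farther from $T$ than $H_i$; in either case the lineage is carried across into the lower half-plane and your disjointness bound collapses. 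Excluding this behavior for out-obstacles is precisely the content of the cited lemma, so as written your proof has a hole at its load-bearing step: either prove the one-sidedness lemma in earnest or cite it, as the paper does.

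A secondary, easily repaired slip: you conclude per obstacle that ``$\min(\text{upper},\text{lower})\le\tfrac12 t(p_i)$'' and then ``compose over all obstacles,'' but the faster of your two fixed lineages can differ from obstacle to obstacle, so the per-obstacle minima need not be realized by any single group. Fix it globally: if $U$ and $W$ denote the total boundary-following times of the upper and lower lineages, disjointness at each obstacle gives $U+W\leq\sum t(p_i)$, hence $\min(U,W)\leq\tfrac12\sum t(p_i)$, and since each lineage's $M$-line travel is monotone toward $T$ it costs at most $t(D)$. Alternatively, do what the paper does: since the swarm splits at every hit point, there is a group that takes the shorter side, a fraction $\min\{\lambda_i,1-\lambda_i\}\leq\tfrac12$, of each obstacle $i$, and one bounds that single group's time directly.
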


\begin{proof}Under SwarmBug2, as the groups split at each hit point, most groups will sometimes take left and sometimes take right at hit points. However, one group will always stay on the left side of the $M$-line, and another will always stay on the right side. These two groups will have paths identical to the paths of Bug2, where the agent either always takes right or always left at hit points. The authors in  \cite{Lumelsky1986} and \cite{Lumelsky1987} showed that, for Bug2, in the case of an out-obstacle, the generated path will lie on one side of the $M$-line. Let the $i$th obstacle be split such that a fraction $\lambda_i\in [0,1]$ of the perimeter lies to the left and $1-\lambda_i$ lies to the right of the $M$-line. One group will stay on the shortest side of the $i$th obstacle every time it is encountered, and the fraction of the $i$th obstacle boundary this group will travel is  $\min \{\lambda_i,1-\lambda_i\}\leq1/2$.\end{proof} 

For the out-obstacle case, we can only apply the bound \eqref{eq:cycles} for the last group to reach the target. An example scene where the travel time for the last group is the upper bound \eqref{eq:cycles} is shown in Fig.~\ref{fig:long}.

\begin{figure}
	\centering
	\includegraphics[width=1\linewidth]{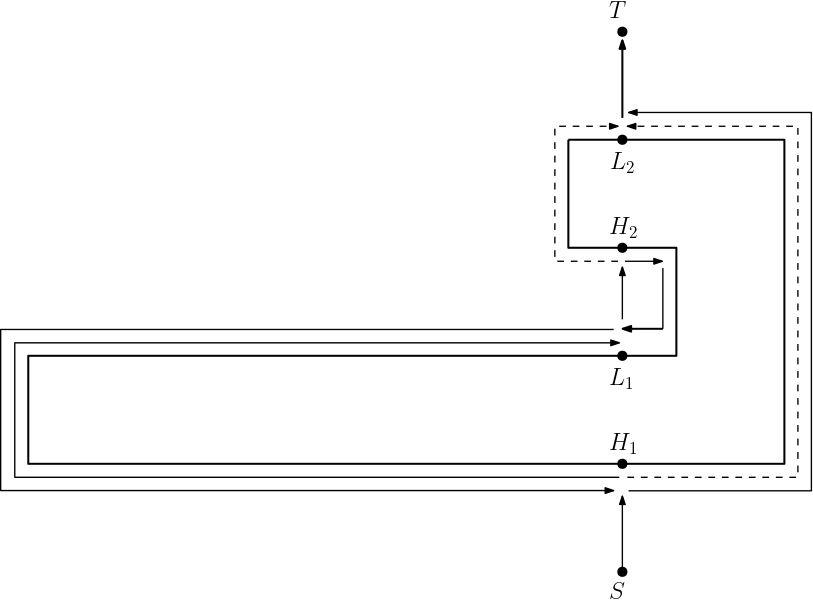}
	\caption{Example scene where the travel time for the last group under SwarmBug2 is the upper bound \eqref{eq:cycles}. 	The slowest group will travel the path $S$-$H_1$-$L_1$-$H_2$-$L_1$-$H_1$-$L_2$-$T$, which is twice along the path $H_1$-$L_1$. The obstacle can be constructed such that the part $H_1$-$L_1$ consists of an arbitrarily large proportion of the obstacle perimeter $p$. Hence, $t_l\geq t(D)+2t(p)-\delta=t(D)+m_i/2\cdot t(p)-\delta$ for any $\delta>0$ as in \eqref{eq:cycles}.}
	\label{fig:long}
\end{figure}

\section{CONCLUSIONS}\label{sec:cfw}

% NEEDS MORE

The path generation algorithms SwarmCom, SwarmBug1, and SwarmBug2 were presented, which are swarm extensions of the classical path generation algorithms Com, Bug1, and Bug2. As in the literature for the classical algorithms, we derived bounds for the swarm algorithms' performance in terms of the perimeters of the obstacles. We derived a universal lower bound of the travel time in a constructed scene which applies to any path generation algorithm and thus works as a fundamental limitation of what is possible to achieve in terms of worst-case performance. We also derived upper bounds of the worst-case performance for the algorithms SwarmBug1 and SwarmBug2. The worst case for SwarmCom is unbounded.

We found that swarm bug algorithms perform better in terms of arrival time at the target point $T$ than single bug algorithms. Which swarm bug algorithm is best is not given since the performance varies with the scene. The more greedy algorithms SwarmCom and SwarmBug2 have better performance for simple scenes, although their worst-case performance is significantly worse than that of SwarmBug1. For the greediest algorithm, SwarmCom, the swarm is not guaranteed to reach $T$. This indicates that greediness does not pay off for the problem of navigation in unknown environments. In future work, we plan to evaluate the path generation algorithms SwarmCom, SwarmBug1, and SwarmBug2 in simulations and real-world experiments.

\bibliographystyle{unsrt}

\bibliography{refs2} %

\end{document}